\newcommand\ind[1]{\ensuremath{\mathds{1}\left[#1\right]}}
\def\argmin{\ensuremath{\mbox{argmin}}}
\def\argmax{\ensuremath{\mbox{argmax}}}
\newcommand{\A}{\mathcal{A}}
\newcommand{\R}{\mathcal{R}}
\newcommand{\X}{\mathcal{X}}
\newcommand{\Sd}{\mathcal{S}^d}
\newcommand{\Null}{\operatorname{Null}}
\begin{document}
\title{Data Poisoning Attacks in Contextual Bandits}
%
%\titlerunning{Abbreviated paper title}
% If the paper title is too long for the running head, you can set
% an abbreviated paper title here
%
\author{Yuzhe Ma\inst{1} \and
Kwang-Sung Jun\inst{1} \and
Lihong Li\inst{2}\and
Xiaojin Zhu\inst{1}
}
\authorrunning{Y. Ma et al.}
% First names are abbreviated in the running head.
% If there are more than two authors, 'et al.' is used.
%
\institute{University of Wisconsin-Madison\\
\email{ma234@wisc.edu, kjun@discovery.wisc.edu, jerryzhu@cs.wisc.edu}\and
Google Brain., Kirkland, WA, USA\\
\email{lihong@google.com}}
\maketitle              % typeset the header of the contribution
\begin{abstract}
We study offline data poisoning attacks in contextual bandits, a class of reinforcement learning problems with important applications in online recommendation and adaptive medical treatment, among others.
We provide a general attack framework based on convex optimization and show that by slightly manipulating rewards in the data, an attacker can force 
the bandit algorithm to pull a target arm for a target contextual vector.
The target arm and target contextual vector are both chosen by the attacker.
That is, the attacker can hijack the behavior of a contextual bandit. 
We also investigate the feasibility and the side effects of such attacks, and identify future directions for defense. 
Experiments on both synthetic and real-world data demonstrate the efficiency of the attack algorithm.

\keywords{data poisoning  \and contextual bandit \and adversarial attack.}
\end{abstract}
\section{Introduction}

As an important step toward trustworthy AI, adversarial learning studies robustness of machine learning systems against malicious attacks~\cite{goodfellow2014explaining,joseph_nelson_rubinstein_tygar_2018}.
Training set poisoning is a type of attack where the adversary can manipulate the training data such that a machine learning algorithm trained on the poisoned data would produce a defective model. 
The defective model is often similar to a good model, but affords the adversary certain nefarious leverages~\cite{Alfeld2016Data,biggio2012poisoning,jagielski2018manipulating,li2016data,Mei2015Security,Mei2015Machine,zhao2018data}.
Understanding training set poisoning is essential to developing defense mechanisms.

Recent studies on training set poisoning attack focused heavily on supervised learning.
There has been little study on poisoning sequential decision making algorithms, even though they are widely employed in the real world.
In this paper, we aim to fill in the gap by studying training set poisoning against contextual bandits.
Contextual bandits are extensions of multi-armed bandits with side information and have seen wide applications in industry including news recommendation~\cite{li10contextual}, online advertising~\cite{chapelle14simple}, medical treatment allocation~\cite{kuleshov14algorithms}, and also promotion of users' well-being~\cite{greenewald17action}.

Let us take news recommendation as a running example for poisoning against contextual bandits.
A news website has $K$ articles (i.e., arms).
It runs an adaptive article recommendation algorithm (the contextual bandit algorithm) to learn a policy in the backend. 
Every time a user (represented by a context vector) visits the website, the website displays an article that it thinks is most likely to interest the user based on the historical record of all users. 
Then the website receives a unit reward if the user clicks through the displayed article, and receives no reward otherwise. 
Usually the website keeps serving users throughout the day and updates its article selection policy periodically (say, during the nights or every few hours). This provides an opportunity for an attacker to perform \textit{offline} data poisoning attacks, e.g. the attacker can sneak into the website backend at night before the policy is updated, and poison the rewards collected during the daytime.
The website unknowingly updates its policy with the poisoned data.  On the next day it behaves as the attacker wanted.

More generally, we study adversarial attacks in contextual bandit where the attacker poisons historical rewards in order to force the bandit to pull a target arm under a target context.
One can view this attack as a form of offline reward shaping~\cite{ng99policy}, but it is adversarial reward shaping.
Our main contribution is an optimization-based attack framework for this attack setting.
We also study the feasibility and side effect of the attack. We show on both synthetic and real-world data that the attack is effective. This exposes a security threat in AI systems that involve contextual bandits. 

\section{Review of Contextual Bandit}

This section reviews contextual bandits, which will be the victim of the attack in this paper. 
A contextual bandit is an abstraction of many real-world decision making problems such as product recommendation and online advertising.  Consider for example a news website which strives to recommend the most interesting news articles personalized for individual users.  Every time a user visits the website, the website observes certain contextual information that describes the user such as age, gender, location, past news consumption patterns, etc.  
The website also has a pool of candidate news articles, one of which will be recommended and shown to the user.  
If the recommended article is interesting, the user may click on it; otherwise, the user may click on other items on the page or navigate to another page.  
The click probability here depends on both the user (via the context) and the recommended article.
Such a dependency can be learned based on click logs and used for better recommendation for future users.

An important aspect of the problem is that the click feedback is observed only for the recommended article, 
%and is not observed for the other articles.
not for others. 
In other words, the decision (choosing which article to show to a user) is irrevocable; it is impractical to force the user to revisit the webpage so as to recommend a different article.  As a result, the feedback data being collected is necessarily biased towards the current recommendation algorithm being employed by the website, raising the need for balancing \textit{exploration} and \textit{exploitation} when choosing arms~\cite{li10contextual}.  This is in stark contrast to a typical prediction task solved by supervised learning where predictions do not affect the data collection.

%For example, these can be $K$ articles available inside a news service.
%Each round of play is characterized by a context. 
%For example, the context can represent the user who visits the news service in that round.
%The bandit algorithm chooses which arm to pull given the context. 
%This corresponds to the news service showing an article to the user.
%The bandit then receives a reward determined jointly by the arm and the context.
%For example, the reward may be the time the user spend on the news service website.

Formally, a contextual bandit has a set $\X$ of contexts and a set $\A=\{1,2,\ldots,K\}$ of $K$ arms.   A contextual bandit algorithm proceeds in rounds $t=1,2,\ldots$.  At round $t$, the algorithm observes a context vector
$x_t\in \R^d$, chooses to pull an arm $a_t\in\A$, and observes a reward $r_t \in \R$.  The goal of the algorithm is to maximize the total reward garnered over rounds.  In the news recommendation example above, it is natural to define $r_t=1$ if user clicks on the article and $0$ otherwise, so that maximizing clicks is equivalent to maximizing the click-through rate, a critical business metric in online recommender systems.

In this work, we focus on the most popular and well-studied setting called linear bandits, where the expected reward is linear map of the context vector.  Specifically, we assume each arm $a$ is associated with an unknown vector $\theta_a\in\R^d$ with $\Vert \theta_a \Vert_2\le S$, so that for every $t$:
\begin{equation}
\label{eq:reward}
r_t=x_t^\top \theta_{a_t}+\eta_t\,,
\end{equation}
where $\eta_t$ is a $\sigma$-subGaussian noise. For simplicity, we assume $\eta_t$ is unbounded and thus the reward can take any value in $\R$.

\iffalse
Define $[K] = \{1,2,\ldots,K\}$.
Formally, a contextual bandit has a set $\X$ of contexts and a set $\A=[K]$ of $K$ arms. 
Each arm is associated with a reward payoff parameter $\theta_a\in\R^d$ ($\Vert \theta_a \Vert_2\le S$).
%\jerry{which norm?}
%
and aims to maximize the long-term payoff (or minimize regret) by adaptively selecting arms. 
For ease of exposition we assume separable context.  That is, in each round $t$ $(t\ge 1)$ a common context vector $x_t \in \R^d$ is provided by the world.
With more notations our algorithm can be generalized in a straightforward manner to the non-separable case where the context consists of $K$ vectors $\{x_{t,a}, a\in[K]\}$, such that each arm $a$ receives a different context vector $x_{t,a}$ in each round. 
The bandit algorithm selects an arm $a_t$ to pull and receives reward 
\begin{equation}
\label{eq:reward}
r_t=x_t^\top \theta_{a_t}+\eta_t, 
\end{equation}
where $\eta$ is a $\sigma$-subGaussian noise. We assume $\eta$ is unbounded and thus the reward can take any value in $\R$.
In the end of round $t$, the bandit algorithm updates its policy and enters the next round.
\fi

Most contextual bandit algorithms adopt the optimism-in-face-of-uncertainty (OFU) principle for efficient exploration.
The OFU principle constructs an Upper Confidence Bound (UCB) for the mean reward of each arm based on historical data and then selects the arm with the highest UCB at each time step~\cite{auer02finite,abbasi11improved}. 
In round $t$, the historical data consists of the context, action, reward triples $(x,a,r)$ from the previous $t-1$ rounds.
It is useful to split the historical data so that the feedback from the same arm is pooled together. %by the arm indices.
Define $[K] = \{1,\ldots,K\}$.
Let $m_a$ be the number of times arm $a$ was pulled up to time $t-1$.
This implies that $\sum_{a \in [K]} m_a = t-1$.
For each $a \in [K]$, let $X_a\in \R^{m_a\times d}$ be the design matrix for rounds, where arm $a$ was pulled and each row of $X_a$ is a previous context.  Similarly, let $y_a\in\R^{m_a}$ be the corresponding reward (column) vector.

A UCB-style algorithm first forms a point estimate of $\theta_a$ by ridge regression
\begin{equation}
\label{eq:hattheta}
\hat\theta_a=(X_a^\top X_a+\lambda I)^{-1}X_a^\top y_a,~~~~ \forall a\in[K],
\end{equation}
where $\lambda > 0$ is a regularization parameter.
At round $t$, the algorithm observes the context $x_t$ and then selects the arm with the highest UCB:
\begin{equation}\label{armselect}
a_t=\argmax_{a\in[K]} \left\{ x_t^\top\hat\theta_a+\alpha_a\Vert x_t\Vert_{V_a^{-1}}\right\}\,,
\end{equation}
where 
$\Vert x_t\Vert_{V_a^{-1}}=\sqrt{x_t^\top V_a^{-1}x_t}$ is the Mahalanobis norm
and $V_a=X_a^\top X_a+\lambda I$. 
Intuitively, for less frequently chosen $a$, the second term above tends to be large, thus encouraging exploration.
The exploration parameter $\alpha_a$ is algorithm-specific. 
For example, in LinUCB~\cite{li10contextual} $\alpha_a=1+\sqrt{\frac{1}{2}\log\frac{2}{\delta}}$ and in OFUL~\cite{abbasi11improved}
$\alpha_a=\sigma\sqrt{2\log(\frac{\text{det}(V_a)^\frac{1}{2}\text{det}(\lambda  I)^{-\frac{1}{2}}}{\delta})}+\lambda^\frac{1}{2}S$, where $\delta>0$ is a confidence parameter. 
%\jerry{give two citations for LinUCB and OFUL}
Here, we assume $\alpha_a$ may depend on input parameters like $\delta$ and observed data up to $t-1$, but not $x_t$.
%\lihong{I added the last sentence; otherwise, proposition~1 in side effect does not hold.  Please verify.}

In Algorithm~\ref{alg:learner}, we summarize the contextual bandit algorithm.
While the bandit algorithm updates its $\hat\theta$ estimates in every round (step 3), in practice due to various considerations such updates often happen in mini-batches, 
%after a number of rounds, 
e.g., several times an hour, or during the nights when fewer users visit the website~\cite{li10contextual,agarwal16multiworld}. 
%During the period without updates, e.g., during daytime.the bandit algorithm is said to use a frozen arm-selection policy from the previous update.
Between these consecutive updates, the bandit algorithm follows a fixed policy obtained from the last update.

\begin{algorithm}
  \begin{algorithmic}[1]
    \caption{Contextual bandit algorithm}
    \label{alg:learner}
    \STATE \textbf{Parameters}: confidence $\delta$, regularizer $\lambda$, UCB function $\alpha$.
    \FOR{$t=1,2,\ldots, T$}
    \STATE Receive context $x_t$, estimate $\hat\theta_a, a\in [K]$ with~\eqref{eq:hattheta}.
    \STATE Pull arm $a_t=\argmax_{a\in[K]} \left\{x_t^\top\hat\theta_a+\alpha_a\Vert x_t\Vert_{V_a^{-1}}\right\}$.
    \STATE World generates reward $r_t=x_t^\top \theta_{a_t}+\eta_t$.
    \STATE Append $x_t$ and $ r_t$ to  $X_{a_t}$ and $y_{a_t}$, respectively.
    \ENDFOR
  \end{algorithmic}
\end{algorithm}

%\jerry{
%(note this is NOT about our work, but a review of standard LinUCB.  just smooth notes section 2.1)
%
%The technical version: focus on the linear case, LinUCB algorithm and its regret guarantee.
%
%Introduce the math. Be precise. But don't get bogged down in too much details because this is just the victim component of our paper.
%}

\section{Attack Algorithm in Contextual Bandit}

We now introduce an attacker with the following attack goal:
\begin{quote}
\textbf{Attack goal $\mathbf{[x^*\rightarrow a^*]}$}: On a particular attack target context $x^*$, force the bandit algorithm to pull an attack target arm $a^*$. 
\end{quote}
For example, the attacker may want to manipulate the news service so that a particular article $a^*$ is shown to users $x^*$ from certain political bases.
The attack is aimed at the current round $t$, or more generally the whole period when the arm-selection policy is fixed.
Any suboptimal arm $a^*$ can be the target arm. 
%; for illustration 
For concreteness, in our experiments the attacker always picks the worst arm $a^*$ as the target arm.
This is defined in the sense of the worst UCB, namely replacing $\argmax$ with $\argmin$ in~\eqref{armselect}, resulting in the target arm in~\eqref{eq:worstUCB}. 

We assume the attacker has full knowledge of the bandit algorithm and has access to all historical data.
The attacker has the power to poison the historical reward vector\footnote{In this paper we restrict the poisoning to modifying rewards for ease of exposition.  More generally, the attacker can add, remove, or modify both the rewards and the context vectors.  Our optimization-based attack framework can be generalized to such stronger attacks, though the optimization could become combinatorial.}  $y_a$, $\forall a\in [K]$. 
Specifically, the attacker can make arbitrary modifications $\Delta_a\in\R^{m_a}$, $\forall a\in [K]$ so that the reward vector for arm $a$ becomes $y_a+\Delta_a$. 
%\lihong{Do we need to justify why poisoning rewards only, but not contexts also?}
After the poisoning attack, the ridge regression performed by the bandit algorithm yields a different solution:
\begin{equation}
\hat\theta_a=V_a^{-1}X_a^\top(y_a+\Delta_a).
\end{equation}
Because such attacks happen on historical rewards in between bandit algorithm updates, we call it offline.

Now we can formally define the attack goal.
\begin{definition}[Weak attack] A target context $x^*$ is called \emph{weakly attacked} into pulling target arm $a^*$ if after attack the following inequalities are satisfied:
\begin{equation}\label{attack:weak}
{x^*}^\top\hat\theta_{a^*}+\alpha_{a^*}\Vert x^* \Vert_{V_{a^*}^{-1}}>{x^*}^\top\hat\theta_a+\alpha_a\Vert x^* \Vert_{V_a^{-1}}, ~~\forall a\neq a^*.
\end{equation}
In other words, the algorithm is manipulated into choosing $a^*$ for context $x^*$.
\end{definition}

To avoid being detected, the attacker hopes to make the poisoning $\Delta_a, a\in[K]$ as small as possible.
We measure the magnitude of the attack by the squared $\ell_2$-norm $\sum_{a\in[K]}\Vert\Delta_a\Vert_2^2$.\footnote{The choice of norm is application dependent, see e.g.,~\cite[Figure 3]{Mei2015Machine}.  Any  norm works for the attack formulation.} 
We therefore formulate the attack as the following optimization problem:
\begin{equation}\label{attack:optimization_weak}
\begin{aligned}
\min_{\Delta_a: a\in [K]}\quad &\sum_{a\in [K]}\Vert\Delta_a\Vert_2^2\\
\text{s.t. \quad} &{x^*}^\top\hat\theta_{a^*}+\alpha_{a^*}\Vert x^* \Vert_{V_{a^*}^{-1}}
	> {x^*}^\top\hat\theta_a+\alpha_a\Vert x^* \Vert_{V_a^{-1}}, \forall a\neq a^*\\
\text{where \quad}&\hat\theta_a=V_a^{-1}X_a^\top(y_a+\Delta_a), ~~\forall a.
\end{aligned}
\end{equation}

The weak attack above ensures that, given the target context $x^*$, the bandit algorithm is forced to pull arm $a^*$ instead of any other arms.
Unfortunately, the constraints do not result in a closed convex set.
To formulate the attack as a convex optimization problem, we introduce a stronger notion of attack that implies weak attack:
\begin{definition}[Strong attack] A target context $x^*$ is called \emph{$\epsilon$-strongly attacked} into pulling target arm $a^*$, for some $\epsilon>0$, if after attack the following holds:
\begin{equation}\label{attack:strong}
{x^*}^\top\hat\theta_{a^*}+\alpha_{a^*}\Vert x^* \Vert_{V_{a^*}^{-1}}\ge \epsilon+{x^*}^\top\hat\theta_a+\alpha_a\Vert x^* \Vert_{V_a^{-1}}, ~~\forall a\neq a^*\,.
\end{equation}
\end{definition}
This is essentially a large margin condition which requires the UCB of $a^*$ to be at least $\epsilon$ greater than the UCB of any other arm $a$.
The margin parameter $\epsilon$ is chosen by the attacker.
%Strong attack implies weak attack.
%The attacker also wants to avoid detection thus would want to make its historical reward manipulations $\Delta_a, a\in[K]$ small.
%The magnitude of modifications can be measured by the squared norm $\sum_{a\in[K]}\Vert\Delta_a\Vert_2^2$.\footnote{The choice of norm is application dependent, see e.g.~\cite{Mei2015Machine} Figure 3.  Any convex norm works for the attack formulation.} 
We achieve strong attack with the following optimization problem:
\begin{equation}\label{attack:optimization_strong}
\begin{aligned}
\min_{\Delta_a: a\in [K]}\quad &\sum_{a\in [K]}\Vert\Delta_a\Vert_2^2\\
\text{s.t. \quad} &{x^*}^\top\hat\theta_{a^*}+\alpha_{a^*}\Vert x^* \Vert_{V_{a^*}^{-1}}
	\ge\epsilon+{x^*}^\top\hat\theta_a+\alpha_a\Vert x^* \Vert_{V_a^{-1}}, ~~\forall a\neq a^*\\
\text{where \quad}&\hat\theta_a=V_a^{-1}X_a^\top(y_a+\Delta_a), \forall a.
\end{aligned}
\end{equation}
The optimization problem above is a quadratic program 
%as it has quadratic objective function 
with linear constraints in $\{\Delta_a\}_{a\in[K]}$.
We summarize the attack in Algorithm~\ref{alg:attacker}.
In the next section we discuss when the algorithm is feasible.

\begin{algorithm}
  \begin{algorithmic}[1]
    \caption{Data Poisoning Attack in Contextual Bandit}
    \label{alg:attacker}
    \STATE \textbf{Input}: victim contextual bandit (Algorithm~\ref{alg:learner}), target context $x^*$, target arm $a^*$, attack margin $\epsilon$, historical data $X_a, y_a, a\in [K]$.
    \STATE Solve~\eqref{attack:optimization_strong} for $\Delta_a, \forall a\in [K]$.
    \STATE If a solution $\Delta_a$ is found, poison $y_a\leftarrow y_a+\Delta_a$; otherwise return \texttt{infeasible}.
  \end{algorithmic}
\end{algorithm}

\section{Feasibility of Attack}
While one can always write down the training set attack algorithm as optimization~\eqref{attack:optimization_strong}, there is no guarantee that such attack is feasible.   
In particular, the inequality constraints may result in an empty set.
One may naturally ask: are there context vectors $x^*$ that simply cannot be strongly attacked?\footnote{Even if some context $x^*$ cannot be strongly attacked, the attacker might be able to weakly attack it.
Weak attack is sufficient for the attacker to force an arm pull of $a^*$. 
However, as $\epsilon\rightarrow 0$ strong attack approaches weak attack.  Thus we only need to characterize strong attacks.}
In this section we present a full characterization of the feasibility question for strong attack.
As we will see, attack feasibility depends on the original training data.
Understanding the answer helps us to gauge the difficulty of poisoning, and may aid the design of defenses.

%We have the following theorem.
The main result of this section is the following theorem that characterizes a sufficient and necessary condition for the strong attack to be feasible.
\begin{theorem}\label{thm:infeasible}
A context $x$ cannot be strongly attacked into pulling $a^*$ if and only if there exists $a\neq a^*$ such that the following two conditions are both satisfied: 
\begin{compactdesc}
\item{(i) $x\in\Null(X_{a^*})\cap\Null(X_a)$, and}
\item{(ii) $\alpha_{a^*}||{x}||_{ V^{-1}_{a^*}}<\epsilon+\alpha_a||{x}||_{ V^{-1}_a}$.}
\end{compactdesc}
\end{theorem}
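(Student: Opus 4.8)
The plan is to reduce the $K-1$ strong-attack constraints to a question of how much leverage the attacker has over each predicted score. First I would rewrite, for every arm $a$,
\[
x^\top\hat\theta_a = x^\top V_a^{-1}X_a^\top(y_a+\Delta_a) = w_a^\top(y_a+\Delta_a),\qquad w_a := X_a V_a^{-1}x,
\]
so that the attacker's only influence on arm $a$'s mean term is the scalar $w_a^\top\Delta_a$, which ranges over all of $\R$ when $w_a\neq 0$ and is identically $0$ when $w_a=0$. The crucial lemma is that $w_a=0$ if and only if $x\in\Null(X_a)$: if $X_a x=0$ then $V_a x=\lambda x$, hence $V_a^{-1}x=\lambda^{-1}x$ and $w_a=\lambda^{-1}X_a x=0$; conversely, $w_a = X_a V_a^{-1}x=0$ puts $z:=V_a^{-1}x$ in $\Null(X_a)$, so $X_a^\top X_a z=0$ and $x=V_a z=\lambda z$, giving $X_a x=0$. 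Moreover, when $x\in\Null(X_a)$ the term $x^\top\hat\theta_a$ vanishes identically, independent of $\Delta_a$.

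Next I would exploit the decoupling of the variables. The term $\Delta_{a^*}$ enters all $K-1$ constraints through $+w_{a^*}^\top\Delta_{a^*}$, whereas each $\Delta_a$ with $a\neq a^*$ enters only its own constraint, through $-w_a^\top\Delta_a$. Rearranged, the constraint for arm $a$ reads
\[
w_{a^*}^\top\Delta_{a^*} - w_a^\top\Delta_a \ge c_a,\qquad c_a := \epsilon + w_a^\top y_a - w_{a^*}^\top y_{a^*} + \alpha_a\Vert x\Vert_{V_a^{-1}} - \alpha_{a^*}\Vert x\Vert_{V_{a^*}^{-1}}.
\]
If $w_{a^*}\neq 0$ (equivalently $x\notin\Null(X_{a^*})$), taking $\Delta_{a^*}=c\,w_{a^*}$ with $c$ large and all other $\Delta_a=0$ drives the left side to $+\infty$ uniformly, so the attack is feasible; this matches the theorem, since condition (i) then fails for every $a$. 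The only remaining regime is $x\in\Null(X_{a^*})$, i.e. $w_{a^*}=0$, where $x^\top\hat\theta_{a^*}=0$ is pinned and each constraint must be met by $\Delta_a$ alone.

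In that regime I would analyze each $a\neq a^*$ separately. If $w_a\neq 0$, choosing $\Delta_a=-c\,w_a$ pushes $x^\top\hat\theta_a\to-\infty$ and satisfies that arm's constraint regardless of the others, so such arms never obstruct. If instead $w_a=0$, both mean terms vanish and the constraint collapses to the $\Delta$-independent inequality $\alpha_{a^*}\Vert x\Vert_{V_{a^*}^{-1}} \ge \epsilon + \alpha_a\Vert x\Vert_{V_a^{-1}}$, which is exactly the negation of (ii) under (i). Because the constraints for distinct $a\neq a^*$ involve disjoint blocks $\Delta_a$ and $\Delta_{a^*}$ is now irrelevant, they can be satisfied independently; hence the system is feasible precisely when no arm falls into this obstructing subcase. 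This delivers both directions at once: the attack is \emph{infeasible} iff some $a\neq a^*$ has $x\in\Null(X_{a^*})\cap\Null(X_a)$ together with the violated margin $\alpha_{a^*}\Vert x\Vert_{V_{a^*}^{-1}} < \epsilon + \alpha_a\Vert x\Vert_{V_a^{-1}}$.

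I expect the main obstacle to be the clean justification of the leverage lemma $w_a=0\iff x\in\Null(X_a)$ and, relatedly, the observation that the constraints genuinely decouple so feasibility can be decided arm by arm; once these are secured the case analysis is routine. Some care is also needed because the margin $\epsilon>0$ and the confidence radii $\alpha_a\Vert x\Vert_{V_a^{-1}}$ stay active even after the mean terms vanish, which is exactly what makes condition (ii) the deciding inequality.
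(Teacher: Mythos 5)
Your proposal is correct and follows essentially the same route as the paper: your leverage lemma ($w_a=0\iff x\in\Null(X_a)$) is exactly the paper's Lemma~\ref{lem:algebraiclem1}, and both arguments then split on whether the mean terms can be driven arbitrarily by scaling $\Delta_{a^*}$ along $X_{a^*}V_{a^*}^{-1}x$ (or $\Delta_a$ along $-X_aV_a^{-1}x$), reducing the obstructed case to the $\Delta$-independent inequality in condition (ii). Your explicit observation that the constraints decouple across arms actually tightens the paper's somewhat terse handling of how the per-arm constructions combine, but it is the same proof in substance.
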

Before presenting the proof, we first provide intuition.
%explain what the theorem states and what the above two conditions promise us. 
The key idea is that a context $x$ cannot be strongly attacked if some non-target arm $a$ is always better than $a^*$ for $x$ for any attack.
This can happen because there are two terms in the arm selection criterion~\eqref{armselect} while the attack can affect the first term only. 
It turns out that under the condition $(i)$ the first term becomes zero. 
If there exists a non-target arm that has a larger second term than that of the target arm (the condition $(ii)$), then no attack can force the bandit algorithm to choose the target arm.

%Theorem~\ref{thm:infeasible} characterizes such a condition exactly.
%Specifically, condition (i) requires that $x$ is orthogonal to all historical contexts of arm $a$ and $a^*$, such that any attack on $\Delta_a$ and $\Delta_{a^*}$ has no effect.
%Then arm selection is solely determined by the UCB confidence band, which the attack has no control over. 
%On top of that, condition (ii) means that the UCB confidence band of $a$ is better than that of $a^*$. 
%Therefore $x$ can not be strongly attacked.

We present an empirical study on the feasibility of attack in Section~\ref{subsec:feasibility}.

\begin{lemma}\label{lem:algebraiclem1}
$ x\in\Null(X_{a^*}) \Leftrightarrow {x}^\top  V^{-1}_{a^*} X_{a^*}^\top=  0$, where $ V^{-1}_{a^*}= X_{a^*}^\top X_{a^*}+\lambda I$.
\end{lemma}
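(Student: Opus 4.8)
The plan is to prove the two implications separately, exploiting the fact that $V_{a^*}=X_{a^*}^\top X_{a^*}+\lambda I$ is symmetric and positive definite (hence invertible, with symmetric inverse) because $\lambda>0$. Throughout I abbreviate $X=X_{a^*}$ and $V=V_{a^*}$, and I first record the bookkeeping observation that the stated condition $x^\top V^{-1}X^\top=0$, which is a row vector, is equivalent by transposition to $XV^{-1}x=0$. Keeping track of this transpose and of the matrix dimensions ($X\in\R^{m_{a^*}\times d}$, $V\in\R^{d\times d}$, $x\in\R^d$) is the only place that demands care.

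For the forward direction ($\Rightarrow$), I would start from $x\in\Null(X)$, i.e. $Xx=0$, and observe that $Vx=(X^\top X+\lambda I)x=X^\top(Xx)+\lambda x=\lambda x$. Thus $x$ is an eigenvector of $V$ with eigenvalue $\lambda$, so $V^{-1}x=\tfrac1\lambda x$. Using the symmetry of $V^{-1}$ I then compute $x^\top V^{-1}X^\top=(V^{-1}x)^\top X^\top=\tfrac1\lambda\,x^\top X^\top=\tfrac1\lambda(Xx)^\top=0$, which is exactly the claimed identity.

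For the reverse direction ($\Leftarrow$), I would set $z=V^{-1}x$, so that the hypothesis $x^\top V^{-1}X^\top=0$ becomes $Xz=0$. Expanding $x=Vz=(X^\top X+\lambda I)z=X^\top(Xz)+\lambda z$ and substituting $Xz=0$ gives $x=\lambda z$, hence $z=\tfrac1\lambda x$. Feeding this back into $Xz=0$ yields $Xx=0$, i.e. $x\in\Null(X)$, completing the equivalence.

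This argument is entirely elementary linear algebra, so I do not anticipate a genuine obstacle. The crucial structural observation that makes both directions short is that every $x\in\Null(X)$ is automatically an eigenvector of $V$ with eigenvalue $\lambda$, which decouples the regularization term from the data term; the symmetry of $V^{-1}$ and careful handling of the transpose are the two technical points to watch.
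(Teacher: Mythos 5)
Your proof is correct and rests on the same key observation as the paper's: any $x\in\Null(X_{a^*})$ is an eigenvector of $V_{a^*}$ with eigenvalue $\lambda$, so $V_{a^*}^{-1}x=\tfrac{1}{\lambda}x$. The forward direction is essentially identical to the paper's; your reverse direction via the substitution $z=V_{a^*}^{-1}x$ is a slight streamlining of the paper's route (which multiplies by $X_{a^*}$ on the right and passes through $\Vert X_{a^*}x\Vert_2^2=0$), but it is the same elementary argument.
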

\begin{proof}
First, we prove $x\in\Null(X_{a^*})\Rightarrow {x}^\top  V^{-1}_{a^*} X_{a^*}^\top=  0$. Note that 
\begin{equation}
\begin{aligned}
 x\in\Null(X_{a^*})&\Rightarrow X_{a^*}x=  0\\
 &\Rightarrow  X_{a^*}^\top X_{a^*}x=  0\\
&\Rightarrow( X_{a^*}^\top X_{a^*}+\lambda I)x= \lambda x\\
&\Rightarrow \frac{1}{\lambda}x=( X_{a^*}^\top X_{a^*}+\lambda I)^{-1}x= V^{-1}_{a^*}x.
\end{aligned}
\end{equation}
Therefore, we have
\begin{equation}
{x}^\top  V^{-1}_{a^*} X_{a^*}^\top= \frac{1}{\lambda}{x}^\top X_{a^*}^\top = \frac{1}{\lambda}( X_{a^*}x)^\top= 0.
\end{equation}
Now we show the other direction. Note that 
\begin{equation}
\begin{aligned}
{x}^\top V^{-1}_{a^*} X_{a^*}^\top=  0&\Rightarrow {x}^\top V^{-1}_{a^*} X_{a^*}^\top X_{a^*}=0\\
&\Rightarrow{x}^\top V^{-1}_{a^*}(V_{a^*} -\lambda I)=0\\
&\Rightarrow{x}^\top=\lambda{x}^\top V^{-1}_{a^*}\\
&\Rightarrow( X_{a^*}^\top X_{a^*}+\lambda I)x=\lambda x\\
&\Rightarrow X_{a^*}^\top X_{a^*}x=0\\
&\Rightarrow x^\top X_{a^*}^\top X_{a^*}x=0\\
%&\Rightarrow (X_{a^*}x)^\top X_{a^*}x=0\\
&\Rightarrow \Vert X_{a^*}x\Vert_2^2 =0\\
&\Rightarrow X_{a^*}x=0\,,
\end{aligned}
\end{equation}
which implies $x\in\Null(X_{a^*})$.\qed

\end{proof}

\begin{proof}[Theorem~\ref{thm:infeasible}]
($\Leftarrow$)
% First we show that if $\exists a\neq a^*$, $(1)$ and $(2)$ are satisfied, then ${x}$ can not be strongly attacked for any $\epsilon$. 
According to lemma~\ref{lem:algebraiclem1}, condition $(i)$ implies 
\begin{equation}
{x}^\top  V^{-1}_{a^*} X_{a^*}^\top( y_{a^*}+\Delta_{a^*})={x}^\top  V^{-1}_a X_{a}^\top( y_{a}+\Delta_{a})=0.
\end{equation}
Combined with (ii) we have for any $\Delta_{a^*}$ and $\Delta_a$,
\begin{equation}
\begin{aligned}
&{x}^\top  V^{-1}_{a^*} X_{a^*}^\top( y_{a^*}+\Delta_{a^*})+\alpha_{a^*}||{x}||_{ V^{-1}_{a^*}} ~~=~~ \alpha_{a^*}||{x}||_{ V^{-1}_{a^*}}\\
&<~~ \epsilon+\alpha_a||{x}||_{ V^{-1}_a} ~~=~~ \epsilon+\alpha_a||{x}||_{ V^{-1}_a}+{x}^\top  V^{-1}_a X_{a}^\top( y_{a}+\Delta_{a})\,.
\end{aligned}
\end{equation}
Thus, ${x}$ cannot be attacked.

($\Rightarrow$)
%Next we show if ${x}$ can not be attacked, then $\exists a\neq a^*$ such that $(1)\wedge(2)$ is true. 
This is equivalent to prove if $\forall a\neq a^*, \neg(i)\vee \neg(ii)$, then $x$ can be attacked. To show $x$ can be attacked, it suffices to find a solution for the optimization problem. 

If $\neg (i)$, then $ X_{a^*}x\neq  0$ or $ X_{a}x\neq  0$. Assume $ X_{a^*}x\neq  0$ (similar for the case $ X_{a}x\neq 0$), then ${x}^\top  V^{-1}_{a^*} X_{a^*}^\top\neq  0$. Let $p= X_{a^*} V^{-1}_{a^*}x$. For any $a\neq a^*$, arbitrarily fix some $\Delta_a$, then define 
\begin{equation}
q_a=\epsilon+\alpha_a||{x}||_{ V^{-1}_a}+{x}^\top  V^{-1}_a X_{a}^\top( y_{a}+\Delta_{a})-{x}^\top  V^{-1}_{a^*} X_{a^*}^\top  y_{a^*}-\alpha_{a^*}||{x}||_{ V^{-1}_{a^*}}.
\end{equation}
 Let $\Delta_{a^*}=kp$, where $k=\max_{a\neq a^*}\frac{q_a}{\Vert p\Vert_2^2}$. Thus,
 \begin{equation}
{x}^\top  V^{-1}_{a^*} X_{a^*}^\top\Delta_{a^*}=p^\top \Delta_{a^*}=k\Vert p\Vert_2^2\ge \frac{q_a}{\Vert p\Vert_2^2}\Vert p\Vert_2^2=q_a, ~~\forall a\neq a^*.
 \end{equation}
Therefore, we have for all $a \ne a^*$ that
\begin{equation}
{x}^\top  V^{-1}_{a^*} X_{a^*}^\top( y_{a^*}+\Delta_{a^*})+\alpha_{a^*}||{x}||_{ V^{-1}_{a^*}}\ge\epsilon+\alpha_a||{x}||_{ V^{-1}_a}+{x}^\top  V^{-1}_a X_{a}^\top( y_{a}+\Delta_{a})\,,
\end{equation}
which means $x^*$ can be attacked. 

If $\neg (ii)$, simply letting $\Delta_{a^*}=- y_{a^*}$ and   $\Delta_{a}=- y_{a}$ suffices, concluding the proof.\qed
\end{proof}

\section{Side Effects of Attack}

While the previous section characterized contexts $x^*$ that cannot be strongly attacked, this section asks an opposite question: suppose the attacker was able to strongly attack some $x^*$ by solving~\eqref{attack:optimization_strong}, what other contexts $x$ are affected by the attack? For example, there might exist some context $x \neq x^*$ whose pre-attack chosen arm is $a(x)=1$, but becomes $a'(x)=2$.
The side effects can be construed in two ways: on one hand the attack automatically influence more contexts than just $x^*$; on the other hand they make it harder for the attacker to conceal an attack.  
The latter may be utilized to facilitate detection by a defender. In this section, we study the side effect of attack and provide insights into future research directions on defense.

The side effect is quantified by the fraction of contexts in the context space such that the chosen arm is changed by the attacker. Specifically, let $\X$ be the context space and $P$ be a probability measure over $\X$. 
Let $a(x)$ and $a'(x)$ be the pre-attack and post-attack chosen arm of a context $x$. Then the \textit{side effect fraction} is defined as:
\begin{equation}\label{SER}
s = \int_{x\in\X} \ind{a(x)\neq a'(x)}P(x)dx\,.
\end{equation}
One can compute an \textit{empirical side effect fraction} $\hat s$ as follows. First sample $m$ contexts from $P$, and then let $\hat s=\frac{1}{m}\sum_{i=1}^m \ind{a(x)\neq a'(x)}$.  It is easy to show using Chernoff bound that $|s-\hat s|$ decays to $0$ at the rate of $1/\sqrt{m}$.

We now give some properties of the side effect. Specifically, we first show if $x$ is affected by the attack, $cx$ is also affected by the attack for any $c>0$.
\begin{proposition}\label{thm:side_effect}
If a context $x$ satisfies $a(x)\neq a'(x)$, then $a(cx)\neq a'(cx)$ for any $c>0$, where $a(x)$ and $a'(x)$ are the pre-attack and post-attack chosen arm of $x$. Moreover, $a'(cx)=a'(x)$, i.e., the post-attack chosen arms for $cx$ and $x$ are exactly the same.
\end{proposition}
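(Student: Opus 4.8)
The plan is to exploit the scale-invariance of the arm-selection criterion~\eqref{armselect} directly. Recall that for any arm $a$ the decision value for a context $x$ is
\begin{equation}
f_a(x) = x^\top\hat\theta_a + \alpha_a\Vert x\Vert_{V_a^{-1}}\,,
\end{equation}
and the chosen arm is $a(x)=\argmax_{a\in[K]} f_a(x)$. First I would observe that each of the two terms in $f_a$ is positively homogeneous of degree one in $x$: the linear term $x^\top\hat\theta_a$ is obviously linear, and the Mahalanobis norm satisfies $\Vert cx\Vert_{V_a^{-1}} = \sqrt{c^2\, x^\top V_a^{-1}x} = c\Vert x\Vert_{V_a^{-1}}$ for $c>0$, using $c>0$ so that $\sqrt{c^2}=c$. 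Hence $f_a(cx) = c\, f_a(x)$ for every arm $a$ and every $c>0$. Crucially, $\alpha_a$ depends only on the historical data and not on the queried context (as stated explicitly after~\eqref{armselect}), so $\alpha_a$ is unchanged when we replace $x$ by $cx$.

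The key step is then that multiplying every arm's score by the common positive factor $c$ preserves all pairwise orderings among the scores, and therefore preserves the $\argmax$. Concretely, for any two arms $a,b$ we have $f_a(x) > f_b(x) \iff c\,f_a(x) > c\,f_b(x) \iff f_a(cx) > f_b(cx)$, since $c>0$. This shows $\argmax_{a} f_a(cx) = \argmax_{a} f_a(x)$, i.e. the chosen arm is invariant under positive scaling, $a(cx)=a(x)$. I would verify this both for the pre-attack estimates $\hat\theta_a=V_a^{-1}X_a^\top y_a$ and for the post-attack estimates $\hat\theta_a=V_a^{-1}X_a^\top(y_a+\Delta_a)$; since the homogeneity argument only uses the form of $f_a$ and not the particular value of $\hat\theta_a$, it applies verbatim in both cases. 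Thus $a(cx)=a(x)$ and $a'(cx)=a'(x)$ for all $c>0$.

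From here the proposition follows immediately. Combining $a(cx)=a(x)$ and $a'(cx)=a'(x)$ with the hypothesis $a(x)\neq a'(x)$ gives $a(cx)=a(x)\neq a'(x)=a'(cx)$, which is exactly the claim that $cx$ is affected. The moreover clause $a'(cx)=a'(x)$ is precisely the post-attack invariance already established. One minor point I would be careful about is ties in the $\argmax$: if the attacker's strong-attack constraint~\eqref{attack:strong} is satisfied with a strict margin $\epsilon>0$, the post-attack target arm is the unique maximizer, so $a'$ is well-defined and the ordering argument has no tie issues; I would note that the same scaling argument preserves strict inequalities, so a strictly winning arm remains strictly winning after scaling.

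I do not anticipate a substantive obstacle here — the result is essentially a one-line consequence of positive homogeneity of the decision function. The only thing worth stating carefully is the independence of $\alpha_a$ from the queried context, because if $\alpha_a$ were allowed to depend on $x$ the homogeneity would break; fortunately the model in the paper fixes $\alpha_a$ from the data up to round $t-1$, so this assumption is available to me.
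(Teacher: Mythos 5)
Your proof is correct and takes essentially the same route as the paper's: both arguments rest on the positive homogeneity of degree one of the UCB score $x^\top\hat\theta_a+\alpha_a\Vert x\Vert_{V_a^{-1}}$ in $x$ (the paper phrases it via the pairwise score differences $f_a(cx)=c\,f_a(x)>0$, while you scale the scores themselves, which is the same observation). Your additional remarks on the independence of $\alpha_a$ from the queried context and on tie-breaking are sensible but not a departure from the paper's argument.
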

\begin{proof}
First, for any $a \ne a'(x)$, define
\begin{equation}
f_a(x)= x^\top\hat\theta_{a'(x)}+\alpha_{a'(x)}\Vert x \Vert_{V_{a'(x)}^{-1}}- {x}^\top\hat\theta_a-\alpha_a\Vert x \Vert_{V_a^{-1}} \,.
\end{equation}
Note that $a'(x)$ is the best arm after attack, thus $f_a(x)>0$, $\forall a\neq a'(x)$. Therefore, for any $c>0$, we have
\begin{equation}
f_a(cx) =c f_a(x)>0, ~~\forall a\neq a'(x)\,,
\end{equation}
which implies that $a'(cx) = a'(x)$. The same argument may be used to show $a(cx)=a(x)$. Therefore, $a'(cx) = a'(x) \neq a(x) = a(cx)$.
\end{proof}
Proposition~\ref{thm:side_effect} shows that if a context $x$ has a side effect, all contexts on the open ray $\{cx: c>0\}$ also have the same side effect.

\begin{proposition}\label{thm:side_effect_strong_attacked}
If a context $x$ is strongly attacked, then $cx$ is also strongly attacked for any $c\ge1$.
\end{proposition}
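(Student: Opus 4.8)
The plan is to show that the \emph{same} poisoning that strongly attacks $x$ also strongly attacks $cx$ whenever $c\ge 1$, so that no new optimization problem needs to be solved. Since $x$ is strongly attacked, I would first fix a feasible poisoning $\{\Delta_a\}_{a\in[K]}$ satisfying~\eqref{attack:strong}, and let $\hat\theta_a=V_a^{-1}X_a^\top(y_a+\Delta_a)$ be the resulting estimates. The goal is then to reuse exactly these $\Delta_a$ (hence the same $\hat\theta_a$) for the context $cx$ and verify that the strong-attack inequalities still hold.

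The key observation is that each side of~\eqref{attack:strong}, apart from the constant margin $\epsilon$, is positively homogeneous of degree one in the context. Indeed, for every arm $a$ the linear term satisfies $(cx)^\top\hat\theta_a=c\,x^\top\hat\theta_a$, and since $c>0$ the Mahalanobis norm scales as $\Vert cx\Vert_{V_a^{-1}}=\sqrt{c^2\,x^\top V_a^{-1}x}=c\Vert x\Vert_{V_a^{-1}}$. Thus the UCB gap between $a^*$ and any $a\neq a^*$, evaluated at $cx$, equals exactly $c$ times the corresponding gap evaluated at $x$.

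Concretely, starting from the strong-attack inequality for $x$ and multiplying through by $c>0$, I would obtain for every $a\neq a^*$:
\begin{equation}
c\bigl(x^\top\hat\theta_{a^*}+\alpha_{a^*}\Vert x\Vert_{V_{a^*}^{-1}}\bigr)\ge c\epsilon+c\bigl(x^\top\hat\theta_a+\alpha_a\Vert x\Vert_{V_a^{-1}}\bigr)\,.
\end{equation}
Rewriting the homogeneous terms then gives $(cx)^\top\hat\theta_{a^*}+\alpha_{a^*}\Vert cx\Vert_{V_{a^*}^{-1}}\ge c\epsilon+(cx)^\top\hat\theta_a+\alpha_a\Vert cx\Vert_{V_a^{-1}}$, and because $c\ge 1$ we have $c\epsilon\ge\epsilon$, so the right-hand side is at least $\epsilon+(cx)^\top\hat\theta_a+\alpha_a\Vert cx\Vert_{V_a^{-1}}$. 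This is precisely condition~\eqref{attack:strong} for the context $cx$, so $cx$ is strongly attacked by the same poisoning.

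The only delicate point — and the reason the statement restricts to $c\ge 1$ rather than all $c>0$ — is the step $c\epsilon\ge\epsilon$: the homogeneous parts of the constraint scale with $c$ but the fixed margin $\epsilon$ does not. Scaling the context \emph{up} therefore only enlarges the achieved margin, whereas shrinking it ($0<c<1$) could drop the margin below $\epsilon$ and violate the strong-attack requirement. Hence I do not expect any substantive obstacle: feasibility transfers for free along the ray in the direction $c\ge 1$, and the entire argument reduces to this homogeneity plus the monotonicity of $c\mapsto c\epsilon$.
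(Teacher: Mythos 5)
Your proof is correct and follows essentially the same route as the paper: both arguments fix the poisoning achieving the strong attack, exploit the degree-one positive homogeneity of the UCB gap $f_a(x)=x^\top\hat\theta_{a^*}+\alpha_{a^*}\Vert x\Vert_{V_{a^*}^{-1}}-x^\top\hat\theta_a-\alpha_a\Vert x\Vert_{V_a^{-1}}$ in $x$, and conclude via $f_a(cx)=cf_a(x)\ge\epsilon$ using $c\ge1$. Your added remark on why $c\ge1$ (rather than all $c>0$) is needed is a correct and useful observation, but the substance of the argument is identical.
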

\begin{proof}
First, for any $a \ne a^*$, define
\begin{equation}
f_a(x)= x^\top\hat\theta_{a^*}+\alpha_{a^*}\Vert x \Vert_{V_{a^*}^{-1}}- {x}^\top\hat\theta_a-\alpha_a\Vert x \Vert_{V_a^{-1}}\,.
\end{equation}
Since $x$ is strongly attacked, we have $f_a(x)\ge\epsilon$, $\forall a\neq a^*$. Therefore $f_a(cx)=cf_a(x)\ge f_a(x)\ge \epsilon$, which shows that $cx$ is also strongly attacked.
\end{proof}
%Moreover, we have the following theorem that shows if $x$ is strongly attacked, then a small region around the target context $x$ is also strongly attacked.
%
%\begin{theorem}\label{thm:side_effect_strong_attacked_ball}
%Assume all the contexts have bounded norm $\Vert x\Vert\le L$, then there exists a ball around $x^*$ with radius $r(\epsilon)=\frac{\epsilon}{\gamma}$ such that any context $x$ is , where 
%\begin{equation}
%\gamma=\frac{2R}{\sqrt{\lambda}}\sqrt{2\log \frac{(1+\frac{tL}{\lambda d})^\frac{d}{2}}{\bdt}}+2S+\max_{a\neq a^*}||\tilde \th_{a^*}-\tilde \th_{a}||.
%\end{equation}
%\end{thm}
%\begin{proof}
%For any $a\neq a^*$, define
%\begin{equation}
%f_a(\u)=({\x^*}+\u)^\top \tilde \th_{a^*}+\alpha_{a^*}||{\x^*}+\u||_{\ibarV_{a^*}}-({\x^*}+\u)^\top \tilde \th_{a}-\alpha_{a}||{\x^*}+\u||_{\ibarV_a},
%\end{theorem}
The above propositions are weak in that they do not directly quantify the side effect fraction $s$.
They only tell us that when there is side effect, the affected contexts form a collection of rays.
In the experiment section we empirically study the side effect fraction.  Further theoretical understanding of the side effect is left as a future work.

\section{Experiments}

Our proposed attack algorithm works for any contextual bandit algorithm taking the form~\eqref{armselect}.  Throughout the experiments, we choose to attack the OFUL algorithm that has a tight regret bound and can be efficiently implemented.

\subsection{Attack Effectiveness and Effort: Toy Experiment}

To study the effectiveness of the attack, we consider the following toy experiment. The bandit has $K=5$ arms, and each arm has a payoff parameter $\theta_a\in \R^d$ where $d=10$, distributed uniformly on the $d$-dimensional sphere, denoted $\Sd$. 
To generate $\theta_a$, we first draw from a $d$-dimensional standard Gaussian distribution, $\tilde\theta_a\sim \mathcal{N}(\textbf{0}, \textbf{$I_d$})$ and then normalize: $\theta_a=\tilde{\theta_a}/\Vert\tilde \theta_a\Vert_2$. 
%$\theta_a=\frac{\tilde{\theta_a}}{\Vert\tilde \theta_a\Vert}$. 

Next, we construct the historical data as follows. 
We generate $n=10^3$ historical context vectors $\{x_1, \ldots, x_n\}$ again uniformly on 
$\Sd$.
% the $d$-dimensional sphere. 
For each historical context $x$,
we pretend the world generates all $K$ rewards $\{r_a : a\in\A\}$ from the $K$ arms according to~\eqref{eq:reward}, where we set the noise level to $\sigma=0.1$.
We then choose an arm $a$
randomly from 
%among the $K$ from
a multinomial distribution: $a \sim \mathrm{multi}(p_1,p_2,...,p_K)$, where $p_{i'}=\frac{\exp(r_{i'})}{\sum_{i'\in\A}\exp(r_{i'})}$. 
This forms one data point $(x, a, r_{a})$, and we repeat it for all $n$ points. 
We then group the historical data to form the appropriate matrices $X_a, y_a$ for every $a \in \A$.
Note that the historical data generated in this way is off-policy with respect to the bandit algorithm.
The regularization and confidence parameters are $\lambda=1$ and $\delta=0.05$, respectively.

In each attack trial, we draw a single target context $x^* \in \R^d$ uniformly from 
$\Sd$. 
%the $d$-dimensional sphere.
Without attack, the bandit would have chosen the arm with the highest UCB based on historical data~\eqref{armselect}.
To illustrate the attack, we will do the opposite and set the attack target arm $a^*$ as the one with the smallest UCB instead:
\begin{equation}
\label{eq:worstUCB}
a^*=\argmin_{a\in[K]} \left\{{x^*}^\top \hat\theta_a+\alpha_a\Vert x^*\Vert_{V_a^{-1}}\right\},
\end{equation}
where $\alpha_a$ is the UCB parameter of the OFUL algorithm~\cite{abbasi11improved}. We set the strong attack margin as $\epsilon=0.001$. 
We then run the attack on $x^*$ with Algorithm~\ref{alg:attacker}.

We run $100$ attack trials. In each trial the arm parameters, historical data, and the target context $x^*$ are regenerated. 
We make two main observations:
\begin{compactenum}
\item The attacker is effective.  All $\epsilon$-strongly attacks are successful.  
\item The attacker's poisoning $\Delta$ is small.  
The total poisoning can be measured by $\Vert \Delta\Vert_2 = \sqrt{\sum_{a\in[K]}\Vert \Delta_a\Vert_2^2}$ in each attack trial. 
However, this quantity depends on the scale of the original pre-attack rewards $y_a$.
It is more convenient to look at the \emph{poisoning effort ratio}:
\begin{equation}\label{def:poisoning_effect_ratio}
\frac{\|\Delta\|_2}{ \|y\|_2 } = \sqrt{\sum_{a\in[K]}\Vert \Delta_a\Vert_2^2 \over \sum_{a\in[K]}\Vert y_a\Vert_2^2}.
\end{equation}
Figure~\ref{Attack:cost} shows the histogram for the poisoning effort ratio of the $100$ attack trials. 
The ratio tends to be small, with a median of $0.26$, which demonstrates that the attacker needs to only manipulate about $26\%$ of the rewards.
\end{compactenum}
These two observations indicate that poisoning attack in contextual bandit is easy to carry out.

\begin{figure}[H]
\centering
\includegraphics[width=0.55\textwidth,height=0.4\textwidth]{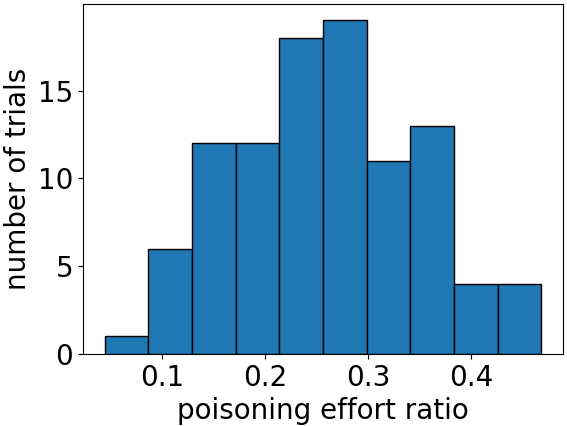}
\caption{Histogram of poisoning effort ratio in the toy experiment}
\label{Attack:cost}
\end{figure}

We now analyze a single, representative attack trial to gain deeper insight into the attack strategy.
In this trial, the UCBs of the $5$ arms without attack are
\[
\text{pre-attack: } (0.204, 0.097, 0.959, 0.507, 0.818)\,.
\]
That is, arm 3 would have been chosen.
As mentioned earlier, $a^*=2$ is chosen to be the target arm 
%the attack target arm is $a^*=2$ 
as it has the smallest pre-attack UCB. 
After attack, the UCBs of all arms become:
$$\text{post-attack: } (0.204,0.605,0.604,0.507,0.604).$$
The attacker successfully forced the bandit to choose arm 2.
It did so by poisoning the historical data to make arm 2 look better and arms 3 and 5 look worse.
It left arms 1 and 4 unchanged.

Figure~\ref{Attack:reward}  shows the attack where each panel is the historical rewards where that arm was chosen.  We show the original rewards ($y_{ai}$, blue circle) and post-attack rewards ($y_{ai}+\Delta_{ai}$, red cross) for all historical points $i$ where arm $a$ was chosen.
Intuitively, to decrease the UCB of arm $a$ the attacker should reduce the reward if the historical context $x$ is ``similar'' to $x^*$, and boost the reward otherwise. 
To see this, we sort the historical points by the inner product $x^\top x^*$ in ascending order. 
As shown in Figure~\ref{Attack:reward}\subref{reward3} and~\subref{reward5}, the attacker gave the illusion that these arms are not good for $x^*$ by reducing the rewards when $x^\top x^*$ is large. 
The attacker also increased the rewards when $x^\top x^*$ is very negative, which reinforces the illusion.
In contrast, the attacker did the opposite on the target arm as shown in Figure~\ref{Attack:reward}\subref{reward2}.

\begin{figure}[H]
\centering
\subfloat[arm 1]{
\includegraphics[width=0.18\textwidth,height=0.16\textwidth]{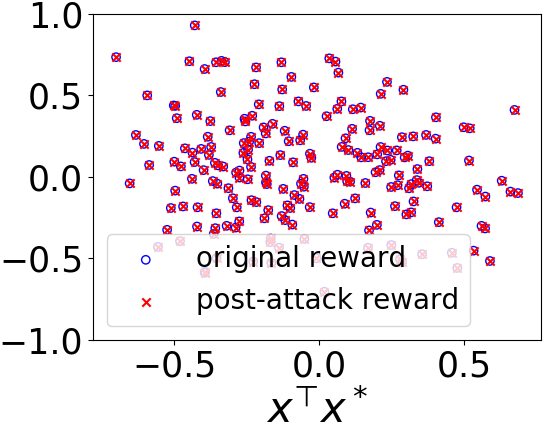}
\label{reward1}
}
\subfloat[arm 2]{
\label{reward2}
\includegraphics[width=0.18\textwidth,height=0.16\textwidth]{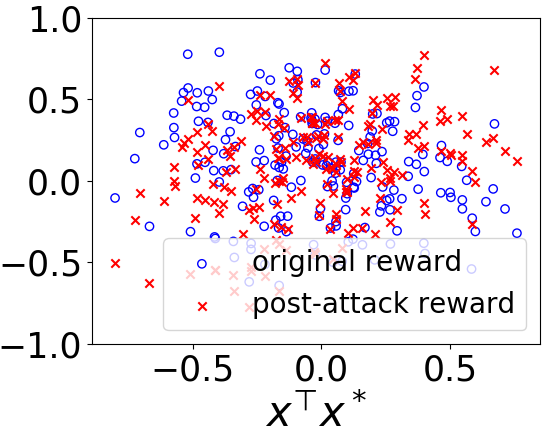}
}
\subfloat[arm 3]{
\includegraphics[width=0.18\textwidth,height=0.16\textwidth]{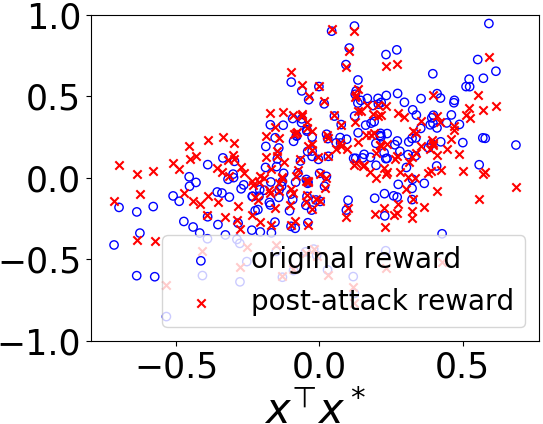}
\label{reward3}
}
\subfloat[arm 4]{
\includegraphics[width=0.18\textwidth,height=0.16\textwidth]{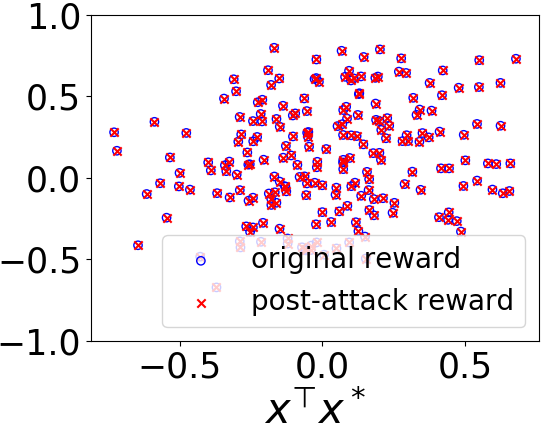}
\label{reward4}
}
\subfloat[arm 5]{
\includegraphics[width=0.18\textwidth,height=0.16\textwidth]{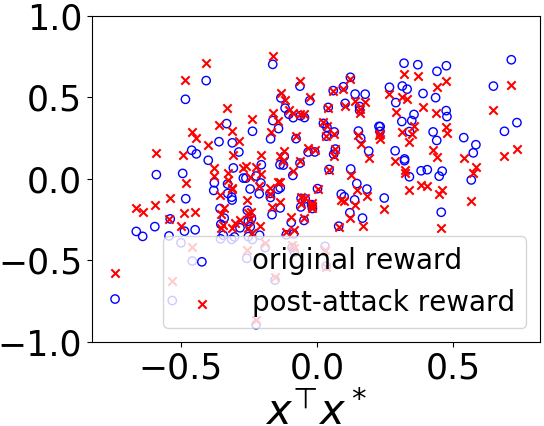}
\label{reward5}
}
\caption{Original reward $y_{ai}$ and post-attack reward $y_{ai}+\Delta_{ai}$ for each arm.}
\label{Attack:reward}
\end{figure}

\begin{figure}[H]
\centering
\subfloat[arm 1]{
\includegraphics[width=0.18\textwidth,height=0.16\textwidth]{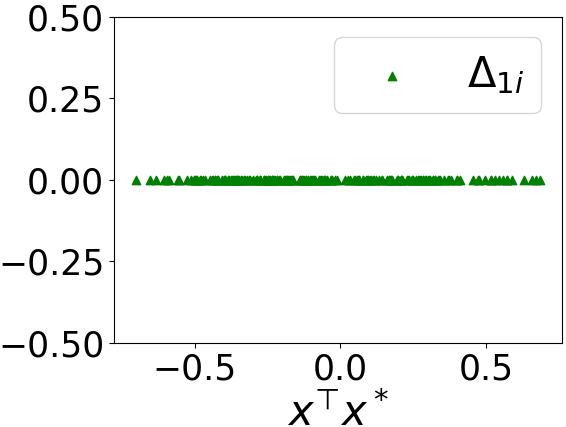}
\label{reward_diff1}
}
\subfloat[arm 2]{
\label{reward_diff2}
\includegraphics[width=0.18\textwidth,height=0.16\textwidth]{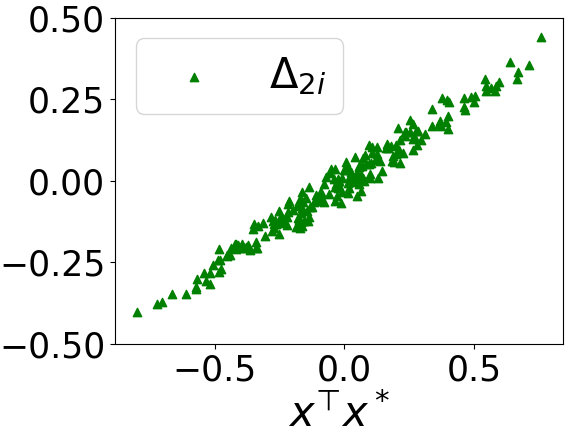}
}
\subfloat[arm 3]{
\includegraphics[width=0.18\textwidth,height=0.16\textwidth]{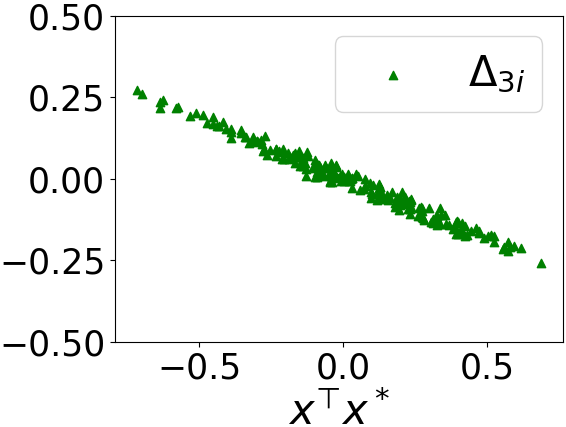}
\label{reward_diff3}
}
\subfloat[arm 4]{
\includegraphics[width=0.18\textwidth,height=0.16\textwidth]{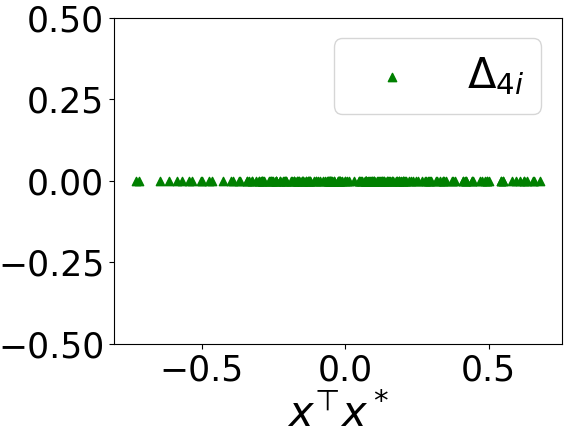}
\label{reward_diff4}
}
\subfloat[arm 5]{
\includegraphics[width=0.18\textwidth,height=0.16\textwidth]{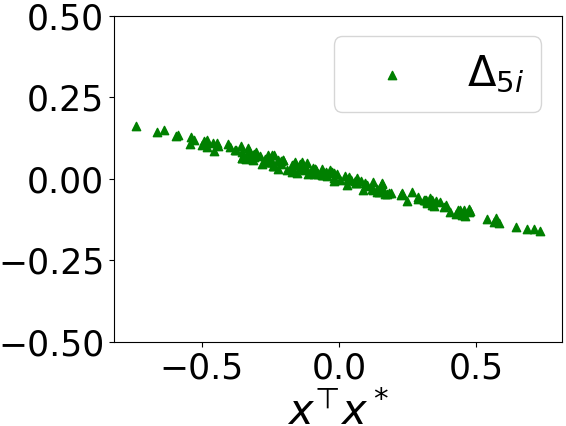}
\label{reward_diff5}
}
\caption{The reward poisoning $\Delta_{ai}$ for each arm.}
\label{Attack:reward_diff}
\end{figure}

\subsection{Attack on Real Data: Yahoo! News Recommendation}

To further demonstrate the effectiveness of the attack algorithm in real applications, we now test it on the Yahoo! Front Page Today Module User Click Log Dataset (R6A).\footnote{URL: \url{https://webscope.sandbox.yahoo.com/catalog.php?datatype=r}~.}
The dataset contains a fraction of user click log for news articles
displayed in the Featured Tab of the Today Module on Yahoo! Front Page (http://www.yahoo.com) during the first ten days in May 2009. Specifically, it contains about $46$ million user visits, 
where each user is represented as a $6$-dimensional contextual vector. 
When a user arrives, the Yahoo! Webscope program selects an article (an arm) from a candidate article pool and displays it to the user. 
The system receives reward $1$ if the user clicks on the article and $0$ otherwise.  Contextual information about users can be found in prior work~\cite{li10contextual}.

To apply the attack algorithm, we require that the set of arms remain unchanged. 
However, the Yahoo! candidate article pool (i.e., the set of arms) varies as new articles are added and old ones are removed over time.
Nonetheless, there are long periods of time where the set of arms is fixed.
We restrict ourselves to such a stable time period for our experiment (specifically the period from 7:25 to 10:35 on May 1, 2009)
in the Yahoo! data, which contains $243$,$667$ user visits.
During this period the bandit has $K=20$ fixed arms. 
We further split the time period such that the first $n=8000$ user visits are used as the historical training data to be poisoned, and the remaining $m=163,667$ data points as the test data. 
The bandit learning algorithm uses regularization $\lambda=1$. The confidence parameter is $\delta=0.05$. 
The subGaussian parameter is set to $\sigma=\frac{1}{4}$ for binary rewards.

We simulate attacks on three target user context vectors: The most frequent user context vector $x^*=\bar x$, a middle user context vector $x^*=$\sout{$x$}, 
%\lihong{Interesting notation, but at first sight it looks like a typo...  Maybe a less confusing one?} 
and the least frequent user context vector $x^*=\underline x$ in the test data. These three user context vectors appeared $5508$, $106$, and $1$ times, respectively, in the test data. 
Note that there are potentially many distinct real-world users that are mapped to the same user contextual vector, therefore the ``user'' in our experiment does not necessarily mean a real-world individual that appeared thousands of times.

We again choose as the target arm $a^*$ the worst arm on the target user as defined by~\eqref{eq:worstUCB}. To determine the target arm, we first simulate the bandit algorithm on the original (pre-attack) training data, and then pick the arm with the smallest UCB for that user. For the three target users we consider, the target arms are $8$, $3$, and $8$ respectively.  The attacker uses attack margin $\epsilon=0.001$. 

Different from the toy example where the reward can be any value in $\R$, the reward in the Yahoo! dataset must be binary, corresponding to a click-or-not outcome of the recommendation. Therefore, the attacker must enforce $y_{ai}+\Delta_{ai}\in\{0,1\}$. However, this results in a combinatorial problem.   To preserve convexity, we instead relax the attacked reward into a box constraint: $y_{ai}+\Delta_{ai}\in[0,1]$. We add these new constraints to~\eqref{attack:optimization_strong} and solve the following optimization: 
\begin{equation}\label{attack:optimization_real}
\begin{aligned}
\min_{\Delta \in \R^n}\quad &\sum_{a\in [K]}\Vert\Delta_a\Vert_2^2\\
\text{s.t. \quad} &{x^*}^\top\hat\theta_{a^*}+\alpha_{a^*}\Vert x^* \Vert_{V_{a^*}^{-1}}
	\ge\epsilon+{x^*}^\top\hat\theta_a+\alpha_a\Vert x^* \Vert_{V_a^{-1}}, ~~\forall a\neq a^*,\\
	& y_{ai} + \Delta_{ai}\in [0, 1], ~~\forall i\in[m_a], ~~\forall a,\\
\text{where \quad}&\hat\theta_a=V_a^{-1}X_a^\top(y_a+\Delta_a), ~~\forall a.
\end{aligned}
\end{equation}

%---
After the real-valued $\Delta_{ai}$ is computed, the attacker performs rounding to turn $y_{ai}+\Delta_{ai}$ into $0$ or $1$.  Specifically, the attacker thresholds $y_{ai}+\Delta_{ai}$ with a constant $c\in[0,1]$, so that if $y_{ai}+\Delta_{ai}>c$, then let the post-attack reward be $1$, otherwise let the post-attack reward be $0$. 
Note that the poisoned rewards now correspond to ``reward flipping'' from $0$ to $1$ or vice versa by the attacker. 
In our experiment, we let the attacker try out $10^4$ thresholds $c$ equally distributed in $[0,1]$.
The attacker examines different thresholds for two concerns. First, there is no guarantee that the thresholded solution still triggers the target arm pull, thus the attacker needs to check if the selected arm for $x^*$ is $a^*$. If not, the corresponding threshold $c$ is inadmissible. Second, among those thresholds that indeed trigger the target arm pull, the attacker selects the one that minimizes the number of flipped rewards, which corresponds to the smallest poisoning effort in the binary reward case. 
%\jerry{This is ambiguous since there are now two different ratios: the squared 2-norm ratio on the soft Delta, or the fraction of flipped rewards.   Which one is this?  Define it clearly.  This also affects the 0.572, 0.189, and 0.275 values below.}

In Table~\ref{table:real_experiment}, we summarize the experimental results for attacking the three target users. Note that the attack is successful on all three target users. The best thresholds $c$ for $\bar x$, \sout{$x$} and $\underline x$ are $0.0449$, $0.1911$, and $0.0439$, respectively.
The number of flipped rewards is small compared to $n=8000$, which demonstrates that the attacker only needs to spend little cost in order to force the bandit to pull the target arm. Note that the poisoning effect ratio is relatively large. This is because most of the pre-attack rewards are 0, in which case the denominator in~\eqref{def:poisoning_effect_ratio} is small.

\begin{table*}[ht]
	\small
	\centering
	\begin{tabular}{ |p{56mm}|p{2cm}|p{2cm}|p{2cm}|} 
		\hline
		& $\bar x$ & \sout{$x$} & $\underline x$\\
		\hline
		strong attack successful? & True & True & True \\
		\hline
		number [percentage] of flipped rewards & $82$ [$1.0\%$]  & $9$ [$0.1\%$] & $19$ [$0.2\%$] \\
		\hline
		poisoning effort ratio& 0.572 & 0.189 & 0.275\\
		\hline
%		side effect fraction & 0.5391 & 0.0750& 0.5040\\
%		\hline
	\end{tabular}
	\caption{Results of experiments on Yahoo! data}\label{table:real_experiment}
\end{table*}
In Figure~\ref{Attack:realdata}, we show the reward poisoning $\Delta$ on the historical data against the three target users, respectively. In all three cases, only a few rewards of the target arm are flipped from $0$ to $1$ by the attacker while those of the other arms remain unchanged.  Therefore, we only show the reward poisoning on historical data restricted to the target arm (namely on $y_{a^*}$). The $82$ and $19$ flipped rewards overlap in Fig.~\ref{Attack:realdata}~\subref{reward_diff_mfu} and Fig.~\ref{Attack:realdata}~\subref{reward_diff_lfu}. Note that the contexts of those flipped rewards are highly correlated with $x^*$.
\begin{figure}[H]
\centering
\subfloat[Most frequent user $x^*=\bar x$]{
\includegraphics[width=0.33\textwidth,height=0.25\textwidth]{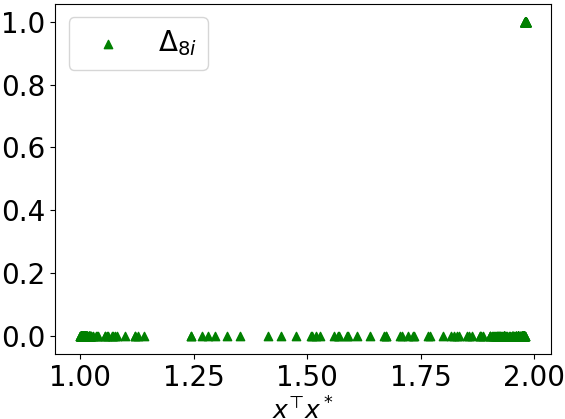}
\label{reward_diff_mfu}
}
\subfloat[Medium frequent user $x^*=$\sout{$x$}]{
\label{reward_diff_medu}
\includegraphics[width=0.33\textwidth,height=0.25\textwidth]{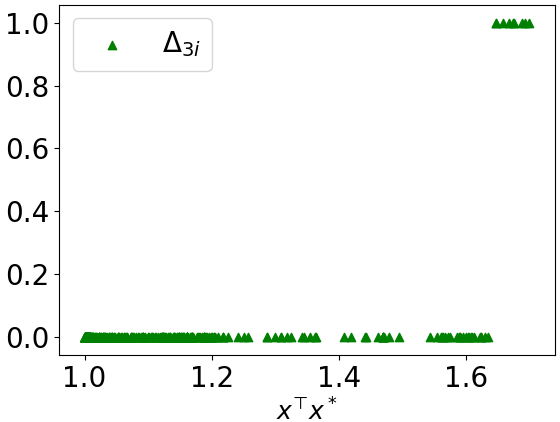}
}
\subfloat[Least frequent user $x^*=\underline x$]{
\includegraphics[width=0.33\textwidth,height=0.25\textwidth]{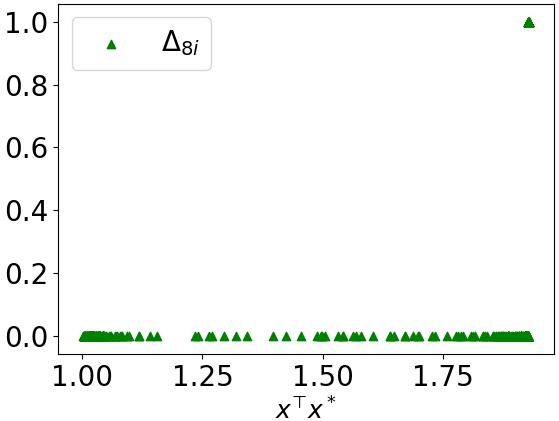}
\label{reward_diff_lfu}
}
\caption{The reward poisoning $\Delta_{ai}$ on three target users.}
\label{Attack:realdata}
\end{figure}

\subsection{Study on Feasibility}
\label{subsec:feasibility}
The attack feasibility depends on the historical contexts $X$, the bandit algorithm-specific UCB parameter $\alpha$, the attack margin $\epsilon$, the target arm $a^*$, and the target context $x^*$. 
To visualize the infeasible region of strong attack on context, we consider the following toy example. 

The bandit has $K=4$ arms.  
The attacker's target arm is $a^*=4$, and the target context $x^*$ lies in $\R^3$.
The historical context vectors are 
\begin{equation}\label{data:hist}
X_1=[1,~0,~0], ~~ X_2=[0,~-1,~1], ~~ X_3=[0, ~2, ~0], ~~ X_4=[2, ~0, ~0].
\end{equation}
The problem parameters are $\sigma=S=\lambda=\epsilon=1$ and $\delta=0.05$. 
According to Theorem~\ref{thm:infeasible}, any infeasible target context $x^*$ satisfies $X_4 x^*=0$. 
Thus such $x^*$ must lie in the subspace spanned by the $y$-axis and $z$-axis. 
This allows us to show infeasible regions as 2D plots.
In Figure~\ref{InfeasibleRegions:addcontext}\subref{infeasible:1_1}, we show the infeasible regions.
We distinguish the infeasible region due to each non-target arm by a different color. 
For example, the infeasible region due to arm 1 consists of all contexts on which the target arm $a^*$ can never be $\epsilon$-better than arm 1 regardless of the attack. 
Note that the infeasible region due to arm 2 is a line segment of finite length, while that due to arm 3 is the whole $y=0$ line. The shape of the infeasible region due to each non-target arm varies because the historical data differs and therefore the conditions in theorem~\ref{thm:infeasible} characterizes different shapes.
Note that the origin $x=0$ satisfies the conditions in Theorem~\ref{thm:infeasible} and therefore is always infeasible. 

One important observation is that, if the bandit algorithm is trained on more historical data,  more context vectors $x^*$ can potentially be strongly attacked.
Formally,
as indicated by Theorem~\ref{thm:infeasible} 
as the null space of historical context matrices $X_a, a\in [K]$ shrinks, the infeasible region shrinks as well.
To demonstrate this,
in Figure~\ref{InfeasibleRegions:addcontext}\subref{infeasible:1_2} we add a context [0, 0, 0.5] to $X_1$ such that the historical contexts are:
\begin{equation}
\label{eq:historicalX}
X_1=\begin{bmatrix}1,~0,~0\phantom{.0}\\0,~0,~0.5\end{bmatrix}, ~~X_2=[0,~-1,~1], ~~X_3=[0,~2,~0], ~~X_4=[2,~0,~0]\,.
\end{equation}
Now that $\Null(X_1)$ is reduced, the infeasibility region due to arm 1 shrinks from the circle in Figure~\ref{InfeasibleRegions:addcontext}\subref{infeasible:1_1} to a horizontal line segment in Figure~\ref{InfeasibleRegions:addcontext}\subref{infeasible:1_2}. However the infeasible region may not shrink to a subset of itself, as indicated by the line segment having wider length along $y$ axis than the original circle, thus the shrink happens in the sense of being restricted to a lower-dimensional subspace.

Next we add a historical context $[0, 1, 0]$ to $X_4$: 
$$X_1=\begin{bmatrix}1,~0,~0\phantom{.0}\\0,~0,~0.5\end{bmatrix}, ~~X_2=[0,~-1,~1], ~~X_3=[0,~2,~0], ~~X_4=\begin{bmatrix}2,~0,~0\\0,~1,~0\end{bmatrix}\,.$$
Then the infeasibility region due to arm 1 and arm 2 both shrink to the origin while arm 3 becomes a line segment, as shown in Figure~\ref{InfeasibleRegions:addcontext}\subref{infeasible:1_3}. 
\begin{figure}[H]
\subfloat[original data]{
\includegraphics[width=0.33\textwidth,height=0.28\textwidth]{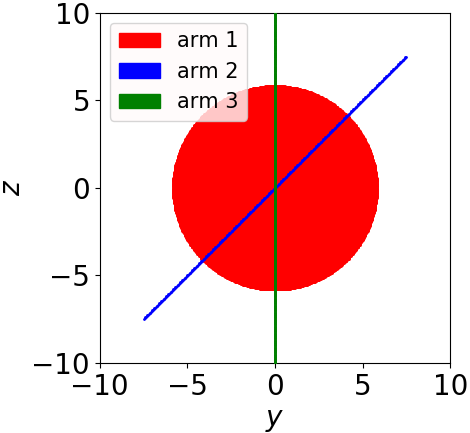}
\label{infeasible:1_1}
}
\subfloat[Context added to $X_1$]{
\label{infeasible:1_2}
\includegraphics[width=0.33\textwidth,height=0.28\textwidth]{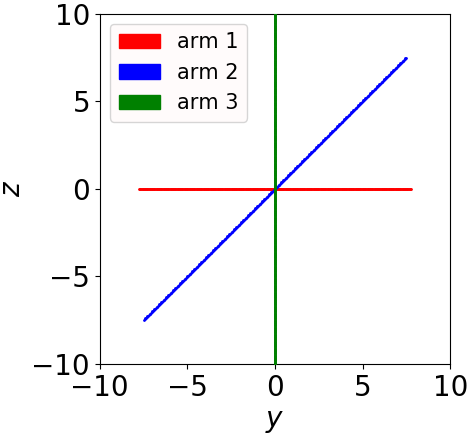}
}
\subfloat[Context added to $X_4$]{
\includegraphics[width=0.33\textwidth,height=0.28\textwidth]{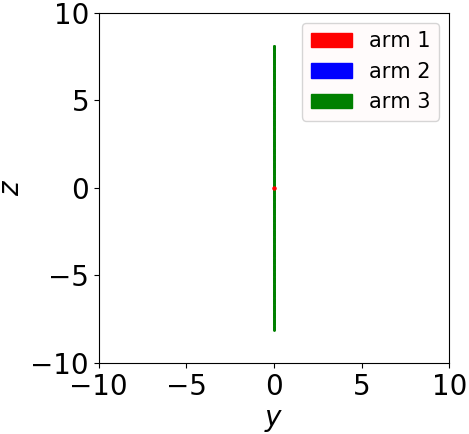}
\label{infeasible:1_3}
}
\caption{Infeasible region due to each non-target arm.}
\label{InfeasibleRegions:addcontext}
\end{figure}
In practice, historical data is often abundant so that $\forall a\neq a^*$, $ X_{a^*}\cup  X_a$ spans the whole $\mathcal{R}^d$ space, and the only infeasible point is the origin.
That is, the attacker can choose to attack essentially any context vector.

Another observation is that the infeasible region shrinks as the attack margin $\epsilon$ decreases, as shown in Figure~\ref{InfeasibleRegions:eps}. The historical data for each arm is the same as~\eqref{data:hist}.
The reason is that a smaller $\epsilon$ makes the constraints in~\eqref{attack:optimization_strong} easier to satisfy and therefore more contexts are feasible. 
As $\epsilon\rightarrow0$ the infeasible region converges to those contexts that cannot be weakly attacked, which in this example is the line $y=0$ in Figure~\ref{InfeasibleRegions:eps}\subref{infeasible:2_3}. Note that the contexts that cannot be weakly attacked are those that make~\eqref{attack:optimization_weak} infeasible. Therefore, we see that without abundant historical data, there will be some contexts that can never be strongly attacked even when $\epsilon\rightarrow 0$.
Also note that the origin $x^*=0$ can never be strongly attacked by definition.

\begin{figure}[H]
\subfloat[$\epsilon=1$]{
\includegraphics[width=0.33\textwidth,height=0.28\textwidth]{infeasible1.png}
\label{infeasible:2_1}
}
\subfloat[$\epsilon=0.5$]{
\label{infeasible:2_2}
\includegraphics[width=0.33\textwidth,height=0.28\textwidth]{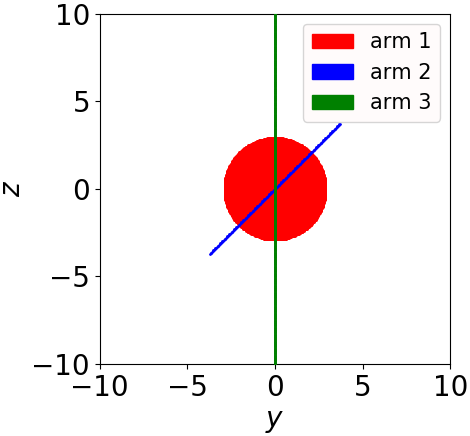}
}
\subfloat[$\epsilon=0.1$]{
\includegraphics[width=0.33\textwidth,height=0.28\textwidth]{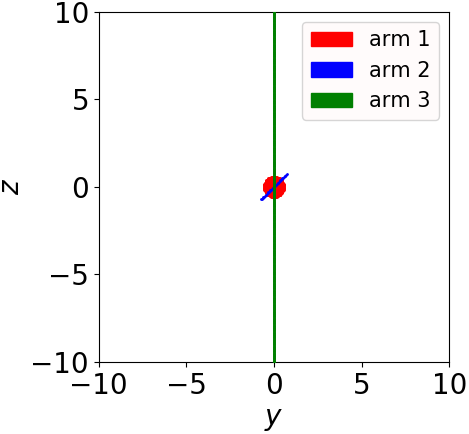}
\label{infeasible:2_3}
}
\caption{Infeasible region shrinks as attack margin $\epsilon$ decreases.}
\label{InfeasibleRegions:eps}
\end{figure}

\subsection{Study on Side Effects}

We first give an intuitive illustration of the side effect in 2D space. The bandit has $K=3$ arms, where the arm parameters are $ \theta_a$. We generate $n=1000$ historical data same as before with noise $\sigma=0.1$. The target context $x^*$ is uniformly sampled from $\X$. The bandit algorithm uses regularization weight $\lambda=1$ and confidence parameter $\delta=0.05$. Without attack, the UCB for the three arms are
\begin{equation}
\text{pre-attack: }(-0.419, ~0.192,~1.013).
\end{equation}
Therefore without attack arm 3 would have been chosen. By our design choice, the target arm is $a^*=1$. The attacker uses margin $\epsilon=0.001$. After attack the UCBs of all arms become:
\begin{equation}
\text{post-attack: }(0.290, ~0.192,~0.289).
\end{equation}
As shown in Figure~\ref{side_effect_2D}, the attacker forces the post-attack parameter of the best arm $\hat\theta_3$ to deviate from $x^*$ while making $\hat\theta_1$ closer to $x^*$. Note that the attacker could also change the norm of the parameter. Note that arm 2 is not attacked, thus $\theta_2$ and $\hat\theta_2$ overlap. The side effect is denoted by the brown arcs on the circle, where the arms chosen for those contexts are changed by the attacker. The side effect fraction for this example is $\hat s=0.315$.

\begin{figure}[H]
\centering
\includegraphics[width=0.5\textwidth,height=0.45\textwidth]{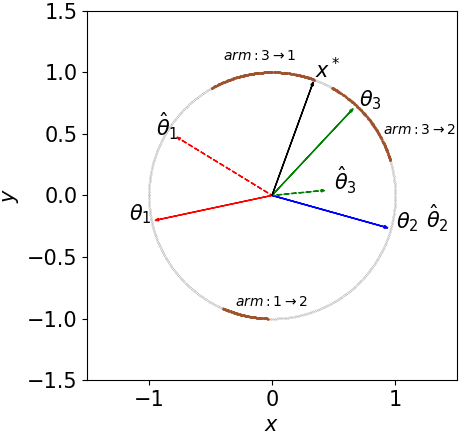}
\caption{Side effect shown in 2D context space.}
\label{side_effect_2D}
\end{figure}

Now we design a toy experiment to study how the side effect depends on the number of arms and the problem dimension. The context space $\X$ 
is the $d$-dimensional sphere $\Sd$ and $P$ is uniform on the sphere. The bandit has $K$ arms, where the arm parameters are sampled from $P$. Same as before, we generate $n=2000$ historical data with noise $\sigma=0.1$. The bandit algorithm uses regularization weight $\lambda=1$. The target context $x^*$ is sampled from $P$.  The attacker's margin is $\epsilon=0.001$ and the target arm $a^*$ is the worst arm on the target context $x^*$. We sample $m=10^3$ contexts from $P$ to evaluate $\hat s$.

In Figure~\ref{side_effect_arm}, we fix $d=2$ and show a histogram of $\hat s$ as the number of arm varies. Note that the attack affects about $30\%$ users. The median $\hat s$ for the three panels are $0.249$, $0.317$, and $0.224$ respectively, which shows that the side effect does not grow with the number of arms.
\begin{figure}[H]
\subfloat[$K=2$]{
\includegraphics[width=0.33\textwidth,height=0.28\textwidth]{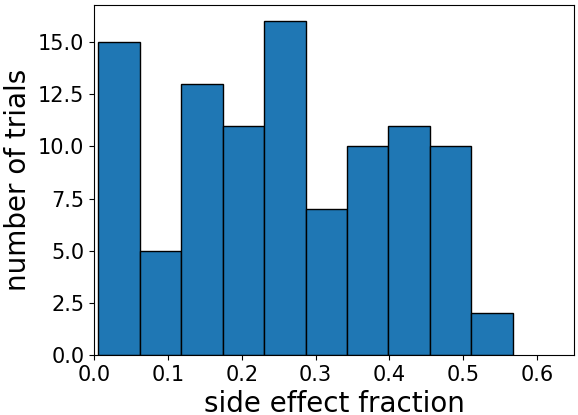}
\label{se_a5}
}
\subfloat[$K=20$]{
\label{se_a20}
\includegraphics[width=0.33\textwidth,height=0.28\textwidth]{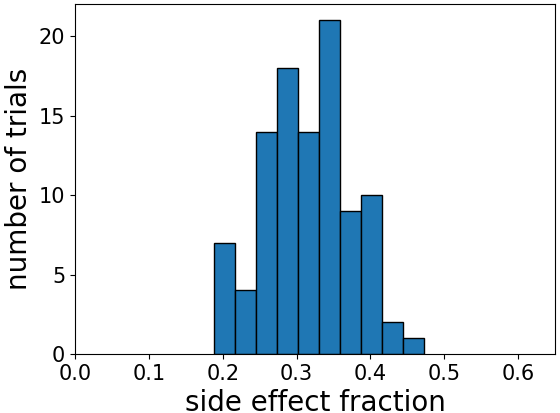}
}
\subfloat[$K=200$]{
\includegraphics[width=0.33\textwidth,height=0.28\textwidth]{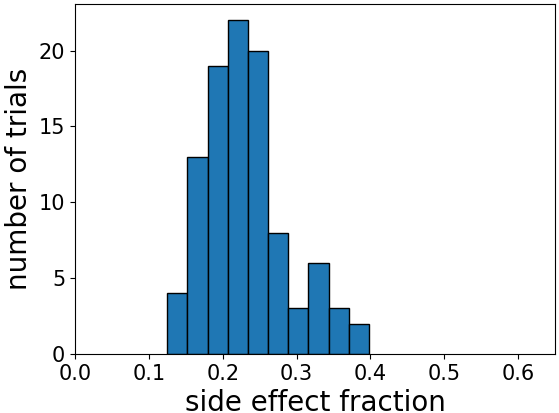}
\label{se_a50}
}
\caption{side effect fraction as arm number $K$ increases.}
\label{side_effect_arm}
\end{figure}

In Figure~\ref{side_effect_d}, we fix $K=5$ and show the  side effect as the dimension $d$ varies. The median $\hat s$ for the three panels are $0.435$, $0.090$, and $0.035$, respectively, which implies that in higher dimensional space, the side effect tends to be smaller.
\begin{figure}[H]
\subfloat[$d=2$]{
\includegraphics[width=0.33\textwidth,height=0.28\textwidth]{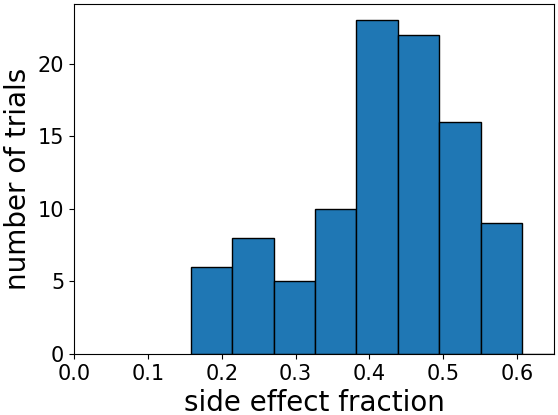}
\label{se_d5}
}
\subfloat[$d=20$]{
\label{se_d20}
\includegraphics[width=0.33\textwidth,height=0.28\textwidth]{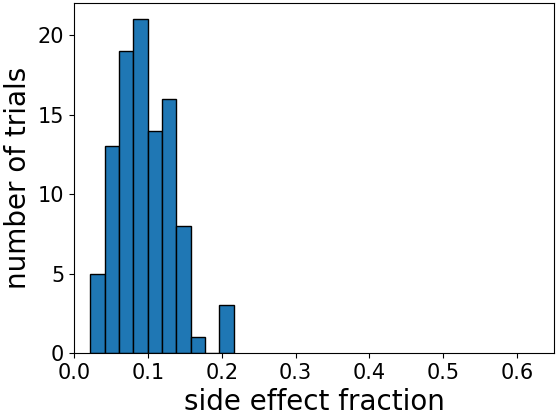}
}
\subfloat[$d=200$]{
\includegraphics[width=0.33\textwidth,height=0.28\textwidth]{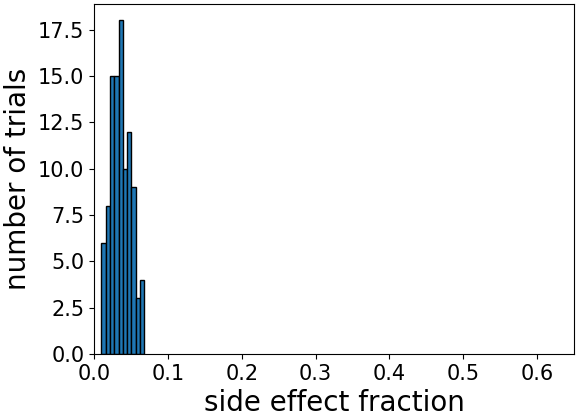}
\label{se_d50}
}
\caption{side effect fraction as dimension $d$ increases.}
\label{side_effect_d}
\end{figure}
As the dimension $d$ increases, the attack has less side effect. This exposes the hazard that in real-world applications where the problem dimension is high, the attack will be hard to detect from side effects. 

We also study the side effect for the real data experiment. There we use the $m=163,667$ test users to evaluate the side effect. The side effect fraction for the three users are $0.5391$, $0.0750$, and $0.5040$, respectively. Note that the most frequent user and the least frequent user have a large side effect, which makes the attack easy to detect. In contrast, the side effect of the medium frequent user is extremely small. This implies that the attack can induce different level of side effect for different target users.

\section{Conclusions and Future Work}
We studied offline data poisoning attack of contextual bandits. 
We proposed an optimization-based attack framework against contextual bandit algorithms. By manipulating the historical rewards, the attack can successfully force the bandit algorithm to pull a pre-specified arm for some target context. Experiments on both synthetic and real-world data demonstrate the effectiveness of the attack.
This exposes a security concern in AI systems that involve contextual bandits.

There are several future directions that can be explored. For example, our current attack only targets a single context $x^*$. Future work can characterize how to target a set of contexts simultaneously, i.e., force the bandit algorithm to pull the target arm for all contexts in some target set. In the simplest case where the set contains finitely many contexts, one can just replicate the constraint in~\eqref{attack:optimization_strong} for each context in the set. The situation is more complicated if the target set is infinite or just too large. 
Another interesting question is how to develop defense mechanisms to protect the bandit from being attacked. As indicated in this paper, the defender can rely on the side effect to sense the existence of attacks.  
Conversely, it is also an open question how the attacker might attempt to minimize its side effect during the attack, so that the chances of being detected are minimized.
Finally, in this paper we restrict the ability of the attacker to manipulating only the historical rewards. However, there are other types of attacks such as poisoning the historical contexts, adding additional data points, removing existing data points, or combinations of the above. The problem could become non-convex or even combinatorial depending on the type of the attack; some of these settings have been studied under the name ``machine teaching''~\cite{Zhu2015Machine,Zhu2018Overview}. Future work needs to identify how to extend our current attack framework to more general settings.

\textbf{Acknowledgment}
This work is supported in part by NSF
1545481, 
1704117,
1623605,
1561512,
and the MADLab AF Center of Excellence FA9550-18-1-0166.

\bibliographystyle{splncs04}
\bibliography{ref}

\end{document}